\theoremstyle{definition}
\newtheorem{theorem}{Theorem}
\newtheoremstyle{exampstyle}
{1pt} 
{1pt} 
{} 
{} 
{\bfseries} 
{} 
{.5em} 
{} 
\theoremstyle{exampstyle} 
\theoremstyle{exampstyle} \newtheorem{remark}{Remark}
\theoremstyle{exampstyle} \newtheorem{definition}{Definition}
\theoremstyle{exampstyle} \newtheorem{lemma}{Lemma}
\theoremstyle{exampstyle} 
\newcommand{\minus}{\scalebox{0.75}[1.0]{$-$}}
\xpatchcmd{\@thm}{\thm@headpunct{.}}{\thm@headpunct{}}{}{}
\def\BibTeX{{\rm B\kern-.05em{\sc i\kern-.025em b}\kern-.08em
		T\kern-.1667em\lower.7ex\hbox{E}\kern-.125emX}}
\begin{document}
	
	\title{Binary Federated Learning with Client-Level Differential Privacy
		\thanks{This work is supported in part by the Hong Kong Research Grant Council under Grant No. 16208921 and NSFC/RGC Collaborative Research Scheme (CRS\_HKUST603/22).}
	}
	
	\author{
		\IEEEauthorblockN{Lumin Liu, Jun Zhang, \emph{Fellow, IEEE}, Shenghui Song, \emph{Senior Member, IEEE}, and Khaled B. Letaief, \emph{Fellow, IEEE}}\\
		\IEEEauthorblockA{Dept. of ECE, The Hong Kong University of Science and Technology, Hong Kong\\
			Email:{ lliubb@ust.hk,
				eejzhang@ust.hk,
				eeshsong@ust.hk,
				eekhaled@ust.hk}}
	}               
	
	\maketitle
	\begin{abstract}
		Federated learning (FL) is a privacy-preserving collaborative learning framework, and differential privacy can be applied to further enhance its privacy protection. Existing FL systems typically adopt Federated Average (FedAvg) as the training algorithm and implement differential privacy with a Gaussian mechanism. However, the inherent privacy-utility trade-off in these systems severely degrades the training performance if a tight privacy budget is enforced. Besides, the Gaussian mechanism requires model weights to be of high-precision. To improve communication efficiency and achieve a better privacy-utility trade-off, we propose a communication-efficient FL training algorithm with differential privacy guarantee. Specifically, we propose to adopt binary neural networks (BNNs) and introduce discrete noise in the FL setting.  Binary model parameters are uploaded for higher communication efficiency and discrete noise is added to achieve the client-level differential privacy protection. The achieved performance guarantee is rigorously proved, and it is shown to depend on the level of discrete noise.  Experimental results based on MNIST and Fashion-MNIST datasets will demonstrate that the proposed training algorithm achieves client-level privacy protection with performance gain while enjoying the benefits of low communication overhead from binary model updates.
	\end{abstract}
	
	\section{Introduction}\label{intro}
	Federated Learning (FL) has recently attracted considerable attention due to its ability to collaboratively and effectively train machine learning models while leaving the local private data untouched \cite{FLsurvey}.  Federated Average (FedAvg) \cite{mcmahan2017communication} is a popular FL training algorithm, which aggregates models trained by different clients via weight averaging and has been successfully implemented on real-world applications \cite{hard2018federated,ramaswamy2019federated}. However, subsequent studies revealed that privacy leakage still happens when malicious attackers obtain information about model weights, i.e., gradient inversion attack \cite{Hatamizadeh_2022_CVPR}, membership inference attack\cite{MIA2017}.\\
	
	Differential Privacy (DP) \cite{dwork2014algorithmic} is the \textit{de facto} metric for characterizing private data analysis due to its complete mathematical form and the information theoretical bound. The protection of the participants' privacy in FL is further enhanced with the Gaussian mechanism \cite{abadi2016deep, geyer2017differentially}, where a zero mean gaussian noise is added to the model weights before transmitting them to the parameter server, as shown in Fig. \ref{fig1}. Specifically, for the privacy amplification of Federated Edge Learning (FEEL), the analog transmission scheme is mostly adopted for its spectrum efficiency and the inherent channel noise, which could be treated as the privacy noise \cite{FL-DP, DP-OTA}. However, the Gaussian mechanism requires the model weights to be of high precision, which prohibits quantization techniques to further reduce the communication cost during training. Besides, to achieve satisfactory privacy protection, a significant amount of DP noise is needed and its adverse impact of model accuracy is inevitable, i.e., it leads to a privacy-utility trade-off. \\
	
	The enormous communication overhead of model uploading is another core challenge in FL, especially in wireless edge systems. Quantizing model updates improves the communication efficiency and presents a communication-accuracy trade-off in FL\cite{HierQSGD}. Quantization after adding Gaussian noise was investigated in \cite{quantizationeffectKim}. In \cite{cpsgd}, it was  proposed to first quantize the model updates and add discrete noise to improve the communication efficiency. It was shown in \cite{jin2020stochastic} that the stochastic-sign-sgd algorithm is differentially private. In these previous works, quantization, model optimization, and privacy are separately considered. In this way, the quantization step may degrade the model performance. This motivates us to consider models inherently with low-precision parameters to improve the communication efficiency for differentially private FL. \\
	
	To improve the communication efficiency, we propose to adopt the binary neural network (BNN). BNN was proposed in \cite{NIPS2016_BNN}, where it was shown that BNNs achieve nearly the same performance as  networks with full-precision model parameters. The training of BNNs requires a set of auxiliary full-precision parameters to track the gradients and update the binary model parameters accordingly. In \cite{FL-BNN}, the authors proposed a communication-efficient FL framework for training BNNs with uploading binary weights.\\
	
	\begin{figure}[t] 
		\centerline{\includegraphics[width=.99\linewidth]{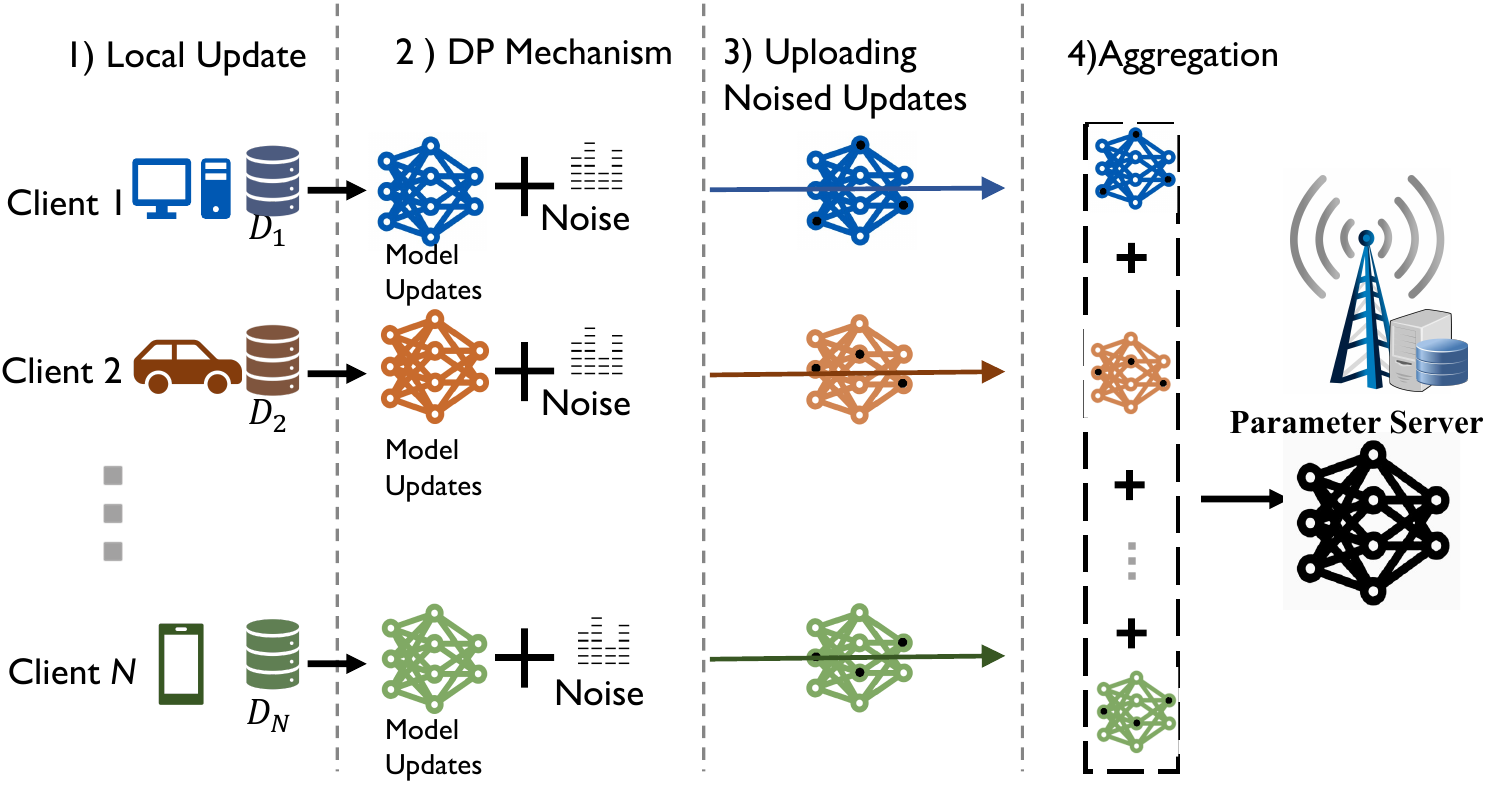}} 
		\caption{Illustration of a differentially-private FL system: Clients collaborate with the server to train a machine learning model. The uploaded model weights are protected with the DP mechanism by adding additional noises.}
		\label{fig1}
	\end{figure}
	
	In this paper, to improve privacy-utility trade-off and communication efficiency, we propose a communication-efficient and differentially private FL framework with BNNs, with Randomize Response as the noising mechanism. Specifically, client-level differential privacy is considered. We show that BNNs achieves privacy protection without degrading the model utility. Extensive experiments are provided to verify our design. 
	\\
	
	The paper is organized as follows. In Section \ref{sys_model}, we introduce the basics for FL system, DP, and BNNs. In Section \ref{algorithm}, we introduce the threat model and present the proposed algorithm. The experimental evaluation is given in Section \ref{simulations}, and Section \ref{conclusion} concludes the paper and provides discussions for possible future work. \\
	
	\section{System Model and Preliminary} \label{sys_model}
	In this section, we will introduce the FL system and the widely-adopted FedAvg algorithm. The preliminary knowledge about DP is then introduced and discussions on client-level DP in FL are provided. Last, the training process of BNN is explained.\\
	
	\subsection{Federated Learning}
	In FL, there are $n$ clients with local private datasets $\{ \mathcal{D}_i \}_{i=1}^n$ following the probability distribution $\{\mathcal{P}_i\}_{i=1}^n$. The dataset size of the $i$-th client is $D_i$. Based on the local dataset $\{ \mathcal{D}_i \}$, the empirical local loss function for the $i$-th client is expressed as
	\begin{equation}
		L_i (\mathbf{w}) = \frac{1}{D_i} \sum_{ {\{ \mathbf{x}_j, y_j \} } \in \mathcal{D}_i}\mathcal{L}(\mathbf{w},\mathbf{x}_j, y_j) \label{eq:4},
	\end{equation}
	where $\mathcal{L}(\mathbf{w},\mathbf{x}_j, y_j)$ is the loss function of the training data sample $\mathbf{x}_j$ with label $y_j$, and $\mathbf{w}$ denotes the model parameters. 
	The target in FL is to learn a global model that performs well on the average of the local data distributions. Denote the joint dataset as $\mathcal{D} = \bigcup_{i=1}^n \mathcal{D}_i $ then the target training loss function in FL is given by
	\begin{equation}
		L (\mathbf{w}) = \frac{1}{\sum_{i=1}^{n}D_i} \sum_{{\xi_j}\in \mathcal{D}}\mathcal{L}(\mathbf{w},\xi_j) = \frac{1}{\sum_{j=1}^{n}D_j} \sum_{i=1}^n D_i L_i(\mathbf{w}). \label{eq:5}
	\end{equation}
	The most commonly adopted training algorithm in FL is FedAvg, where each client periodically updates its model locally and averages the local model parameters through communications with a central server (e.g., at the cloud or edge).
	The parameters of the local model on the $i$-th client after $t$ steps of stochastic gradient descent (SGD) iterations are denoted as $\mathbf{w}_t^i$. In this case,  $\mathbf{w}_t^i$ evolves as follows
	
	\begin{equation}
		\text{$\mathbf{w}_t^i$ } = 
		\begin{cases}
			\text{$\mathbf{w}_{t-1}^i - \eta  \tilde{\nabla} L_i(\mathbf{w}_{t-1}^i)$} &  \text{$t \mid \tau \neq 0$}\\
			\text{ $\frac{1}{n} \sum_{i=1}^n[\mathbf{w}_{t-1}^i - \eta \tilde{\nabla} L_i(\mathbf{w}_{t-1}^i)]$ } &
			\text{$t \mid \tau = 0$}
		\end{cases} \label{eq:6}
	\end{equation}
	

	\subsection{Differential Privacy}
	DP is a rigorous privacy metric for measuring the privacy risk by computing the distribution divergence between the outcomes of two neighboring datasets. Approximate DP, or $(\epsilon, \delta)$-DP is the most classic notion of DP \cite{dwork2014algorithmic}, defined as follows:\\
	
	\begin{definition} {($(\epsilon, \delta)$-DP).}
		An algorithm $\mathcal{M}: \mathcal{X}^n\rightarrow \mathcal{Y}$ is $(\epsilon, \delta)$-DP if, for all neighboring databases $X, X' \in \mathcal{X}^n$ and all $T \subseteq \mathcal{Y}$,
		\begin{equation*}
			Pr[M(X)\in T] \leq e^\epsilon Pr[M(X')\in T] +\delta.
		\end{equation*}
		where the neighboring databases $X, X'$ only differ in one data point, and $\epsilon \geq 0$ and $0<\delta<1$.
	\end{definition}
	
	A relaxed notion of DP is later proposed in \cite{rdp}.
	
	\begin{definition}[($\alpha, \rho(\alpha)$-RDP)]
		An algorithm $\mathcal{M}: \mathcal{X}^n\rightarrow \mathcal{Y}$ is $(\alpha, \rho(\alpha))$-RDP if, for all neighboring databases $X, X' \in \mathcal{X}^n$, the \textit{Rényi $\alpha$-divergence} between $M(X)$ and $M(X')$ satisfies:
		\begin{equation*}
			\begin{split}
				\mathcal{D}_\alpha(M(X) \| M(X')) \triangleq & \frac{1}{\alpha-1} \log \mathrm{E}\left[\frac{M(X)}{M(X')}\right]^\alpha \\
				\leq & \rho(\alpha).
			\end{split}
		\end{equation*}
		where $\alpha>1$ and $\rho \geq0$.
	\end{definition}
	Since in FL, multiple rounds of communication, i.e., queries, are necessary, the composition theorem is needed to compute the privacy loss over multiple rounds. RDP has a tighter composition bound and thus is more suitable to analyze the end-to-end privacy loss of an iterative algorithm. The conversion from RDP to $(\epsilon, \delta)$-DP is as follows: \\
	
	\begin{lemma}[RDP to $(\epsilon, \delta)$-DP]
		If an algorithm $\mathcal{M}$ satisfies $(\alpha, \rho(\alpha))$-RDP, then it also satisfies $(\rho(\alpha)+\frac{log(1/\delta)}{\alpha-1}, \delta)$-DP.
		\label{lemma1}
	\end{lemma}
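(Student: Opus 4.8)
The plan is to reduce the claim to a tail bound on the \emph{privacy loss random variable} and then control that tail using the RDP moment bound via Markov's inequality. Fix a pair of neighboring databases $X, X'$ and let $p$ and $q$ denote the densities of $M(X)$ and $M(X')$, respectively. First I would establish the generic decomposition that for any target set $T\subseteq\mathcal{Y}$ and any $\epsilon\ge 0$,
\[
\Pr[M(X)\in T]\le e^{\epsilon}\Pr[M(X')\in T]+\Pr_{Y\sim M(X)}\!\left[\log\frac{p(Y)}{q(Y)}>\epsilon\right].
\]
This follows by splitting $T$ into the ``bad'' region $B=\{y:p(y)>e^{\epsilon}q(y)\}$ and its complement: on $T\cap B^{c}$ the likelihood ratio is at most $e^{\epsilon}$, producing the $e^{\epsilon}\Pr[M(X')\in T]$ term, while $T\cap B$ is absorbed into $\Pr[M(X)\in B]$. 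Hence it suffices to choose $\epsilon$ so that this last tail probability is at most $\delta$.

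Second, I would bound the tail with Markov's inequality applied to the nonnegative variable $(p(Y)/q(Y))^{\alpha-1}$ under $Y\sim M(X)$. Writing the event as $\{(p(Y)/q(Y))^{\alpha-1}>e^{(\alpha-1)\epsilon}\}$, which is legitimate because $\alpha>1$ lets us exponentiate the threshold without flipping the inequality, Markov gives a bound proportional to $\mathbb{E}_{Y\sim M(X)}[(p(Y)/q(Y))^{\alpha-1}]$. The key step is the change-of-measure identity
\[
\mathbb{E}_{Y\sim M(X)}\!\left[\Big(\tfrac{p(Y)}{q(Y)}\Big)^{\alpha-1}\right]=\int \frac{p(y)^{\alpha}}{q(y)^{\alpha-1}}\,dy=\mathbb{E}_{Y\sim M(X')}\!\left[\Big(\tfrac{p(Y)}{q(Y)}\Big)^{\alpha}\right],
\]
which converts the Markov numerator into exactly the $\alpha$-th moment appearing in the Rényi divergence. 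By the definition of RDP and the hypothesis $\mathcal{D}_\alpha(M(X)\|M(X'))\le\rho(\alpha)$, this moment is at most $e^{(\alpha-1)\rho(\alpha)}$.

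Combining the two steps yields $\Pr[M(X)\in B]\le e^{(\alpha-1)(\rho(\alpha)-\epsilon)}$. Setting this quantity equal to $\delta$ and solving for $\epsilon$ gives precisely $\epsilon=\rho(\alpha)+\frac{\log(1/\delta)}{\alpha-1}$, the claimed privacy parameter; for this choice the tail is at most $\delta$, so the decomposition of the first step delivers the $(\epsilon,\delta)$-DP guarantee, and since $X,X'$ were arbitrary neighbors the bound holds uniformly. I do not expect a serious obstacle: the proof is essentially a tuned Markov bound, and the only delicate points are the change-of-measure identity and the correct use of $\alpha>1$ when exponentiating the threshold. A minor technical caveat is that the argument is phrased for densities; to be fully rigorous one should recast it measure-theoretically using Radon--Nikodym derivatives with the convention $0/0=1$, but this does not alter the computation.
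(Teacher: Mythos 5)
The paper states this lemma without proof, importing it as the standard RDP-to-$(\epsilon,\delta)$-DP conversion from the R\'enyi DP literature. Your argument is correct and is exactly the canonical proof of that result: the split of $T$ over the bad event $B=\{y:p(y)>e^{\epsilon}q(y)\}$, Markov's inequality applied to $(p/q)^{\alpha-1}$ under $M(X)$, and the change of measure $\mathbb{E}_{p}[(p/q)^{\alpha-1}]=\mathbb{E}_{q}[(p/q)^{\alpha}]\le e^{(\alpha-1)\rho(\alpha)}$ all check out, and the choice $\epsilon=\rho(\alpha)+\frac{\log(1/\delta)}{\alpha-1}$ makes the residual tail exactly $\delta$. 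No gaps beyond the measure-theoretic caveat you already flag.
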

	\vspace{2mm}
	\begin{lemma}[Composition Theorem for RDP\cite{rdp}]
		For randomized mechanisms $\mathcal{M}_1$ and $\mathcal{M_2}$ applied on dataset $X$, if $\mathcal{M}_1$ satisfies $(\alpha, \rho_1(\alpha))$-RDP, $\mathcal{M}_2$ satisfies $(\alpha, \rho_2(\alpha))$-RDP, then their composition $\mathcal{M_1}\circ\mathcal{M_2}$ satisfies $(\alpha, \rho_1+\rho_2)$-RDP.
		\label{lemma2}
	\end{lemma}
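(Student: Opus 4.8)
The plan is to work with the exponentiated Rényi divergence rather than the divergence itself, since the quantity $\exp\big((\alpha-1)\mathcal{D}_\alpha(P\|Q)\big) = \mathbb{E}_{y\sim Q}\big[(P(y)/Q(y))^\alpha\big]$ — the $\alpha$-th moment of the likelihood ratio — turns the desired additive composition bound into a multiplicative one, which is far easier to manipulate. Fix the two neighboring datasets $X, X'$ and abbreviate the output laws as $P_1=\mathcal{M}_1(X)$, $Q_1=\mathcal{M}_1(X')$ and, for the (possibly adaptive) second mechanism, the conditional laws $P_2(\cdot\mid y_1)=\mathcal{M}_2(X,y_1)$ and $Q_2(\cdot\mid y_1)=\mathcal{M}_2(X',y_1)$. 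The composed mechanism $\mathcal{M}_1\circ\mathcal{M}_2$ outputs the pair $(y_1,y_2)$ whose joint densities factorize by the chain rule as $P(y_1,y_2)=P_1(y_1)\,P_2(y_2\mid y_1)$, and likewise for $Q$.

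First I would write the joint $\alpha$-moment and use this factorization to split the integrand into a product of the two per-mechanism likelihood ratios each raised to the power $\alpha$. Then I would apply the tower property, conditioning on the first output $y_1$: the inner expectation over $y_2$ is exactly $\exp\big((\alpha-1)\mathcal{D}_\alpha(P_2(\cdot\mid y_1)\|Q_2(\cdot\mid y_1))\big)$, which by the RDP guarantee of $\mathcal{M}_2$ is at most $\exp\big((\alpha-1)\rho_2(\alpha)\big)$ for every fixed $y_1$. Pulling this uniform bound out of the outer expectation leaves precisely the $\alpha$-moment of $\mathcal{M}_1$'s likelihood ratio, which the RDP guarantee of $\mathcal{M}_1$ bounds by $\exp\big((\alpha-1)\rho_1(\alpha)\big)$. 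Multiplying the two bounds gives $\mathbb{E}_Q[(P/Q)^\alpha]\le \exp\big((\alpha-1)(\rho_1(\alpha)+\rho_2(\alpha))\big)$, and finally taking $\tfrac{1}{\alpha-1}\log$ of both sides recovers $\mathcal{D}_\alpha(\mathcal{M}_1\circ\mathcal{M}_2(X)\|\mathcal{M}_1\circ\mathcal{M}_2(X'))\le\rho_1(\alpha)+\rho_2(\alpha)$, which is the claim.

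The one delicate step — and the part I expect to require the most care — is the adaptive conditioning: the bound $\exp((\alpha-1)\rho_2(\alpha))$ on the inner expectation must hold uniformly over every realization $y_1$ of the first output before it can be extracted from the outer integral. This is where the RDP definition being a worst-case guarantee over all neighboring inputs is essential, since for each frozen $y_1$ the pair $\mathcal{M}_2(X,y_1), \mathcal{M}_2(X',y_1)$ is again an instance of a single mechanism evaluated on neighboring inputs. A secondary technicality is measure-theoretic: one should assume $P_i \ll Q_i$ so that the likelihood ratios are well defined, and replace densities by Radon–Nikodym derivatives when the outputs are not discrete. In the non-adaptive special case the argument collapses to the observation that the $\alpha$-moment of a product measure factorizes, so no conditioning is needed at all.
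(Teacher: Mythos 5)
The paper states this lemma without proof, citing Mironov's R\'enyi DP paper, so there is no in-paper argument to compare against; your proof is correct and is essentially the standard one from that reference: factor the joint likelihood ratio by the chain rule, bound the conditional $\alpha$-moment of $\mathcal{M}_2$ uniformly over the first output, and then bound the remaining $\alpha$-moment of $\mathcal{M}_1$. Your flagged subtlety is the right one — for the adaptive case the lemma as stated is slightly informal, since one needs $\mathcal{M}_2(\cdot,y_1)$ to satisfy $(\alpha,\rho_2(\alpha))$-RDP for \emph{every} auxiliary input $y_1$ before the uniform bound can be pulled out of the outer expectation; with that reading (or in the non-adaptive product-measure case, which is all the paper actually uses for its $T$-round accounting) your argument is complete.
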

	For BNNs, there are only binary values of the model weights, adding continuous noise, i.e., gaussian noise or laplace noise, would increase the communication overhead. Here we adopt discrete noise, i.e., Randomize Response, as the noising mechanism. 
	\begin{definition}[Randomize Response]
		For a query function $h: X \rightarrow \{0, 1\}$, the Randomize Response mechanism $\mathbf{RR}(\cdot)$ for $h$ is defined as :
		\begin{equation}
			\mathbf{RR}_\gamma(h(D)) = 
			\begin{cases}
				h(D) &  \text{with probability } 1/2+\gamma,\\
				1-h(D) & \text{with probability } 1/2-\gamma.
			\end{cases} 
		\end{equation}, where $0< \gamma <1/2$.
	\end{definition}
	\vspace{2mm}
	\begin{lemma}[RDP for Random Response\cite{rdp}]
		The Randomize Response mechanism $\mathbf{RR}_\gamma$ satisfies:
		\small
		\begin{equation*}
			\left( 
			\alpha, 
			\frac{log
				\left(
				(\frac{1}{2}+\gamma)^\alpha(\frac{1}{2}-\gamma)^{(1-\alpha)}
				+(\frac{1}{2}-\gamma)^\alpha (\frac{1}{2}+\gamma)^{1-\alpha}
				\right)}{\alpha-1}
			\right)
		\end{equation*}
		-RDP, if $\alpha>1$, and 
		\begin{equation*}
			\left(\alpha, 2\gamma log\frac{1/2+\gamma}{1/2-\gamma}
			\right)-\text{RDP}
		\end{equation*} if $\alpha=1$
	\label{lemma3}
	\end{lemma}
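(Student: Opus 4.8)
The plan is to reduce the RDP guarantee to a single direct computation of the Rényi $\alpha$-divergence between the two possible output laws of $\mathbf{RR}_\gamma$. Because the query $h$ takes values in $\{0,1\}$, for any pair of neighboring databases $X,X'$ the outputs $h(X)$ and $h(X')$ either coincide or differ. When they coincide, $\mathbf{RR}_\gamma$ induces identical distributions on $\{0,1\}$, so the divergence vanishes and the claimed bound holds trivially. The binding case is therefore a query flip, and I would note that the two sub-cases $h(X)=0,\,h(X')=1$ and $h(X)=1,\,h(X')=0$ produce the same divergence: swapping the labels $0$ and $1$ simultaneously in both arguments is a relabeling of the outcome space, under which Rényi divergence is invariant, and it interchanges the two output laws. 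Hence it suffices to evaluate a single flip configuration, and that value is exactly the supremum over all neighboring pairs demanded by the RDP definition.

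First I would fix $h(X)=0$ and $h(X')=1$ and write the induced laws explicitly: $M(X)$ puts mass $\frac12+\gamma$ on $0$ and $\frac12-\gamma$ on $1$, while $M(X')$ puts mass $\frac12-\gamma$ on $0$ and $\frac12+\gamma$ on $1$. For $\alpha>1$ I would substitute these into the finite-sum form of the Rényi divergence,
\[
\mathcal{D}_\alpha(M(X)\|M(X')) = \frac{1}{\alpha-1}\log\!\Big(\sum_{y\in\{0,1\}}\Pr[M(X)=y]^\alpha\,\Pr[M(X')=y]^{1-\alpha}\Big),
\]
whose two terms are precisely $(\frac12+\gamma)^\alpha(\frac12-\gamma)^{1-\alpha}$ and $(\frac12-\gamma)^\alpha(\frac12+\gamma)^{1-\alpha}$, reproducing the claimed $\rho(\alpha)$.

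For the boundary value $\alpha=1$ the Rényi divergence degenerates to the Kullback--Leibler divergence, so I would instead compute $\mathcal{D}_1(M(X)\|M(X'))=\sum_y \Pr[M(X)=y]\log\frac{\Pr[M(X)=y]}{\Pr[M(X')=y]}$ directly. The two logarithms are negatives of one another, so pulling out the common factor $\log\frac{1/2+\gamma}{1/2-\gamma}$ collapses the sum to $\big((\frac12+\gamma)-(\frac12-\gamma)\big)\log\frac{1/2+\gamma}{1/2-\gamma}=2\gamma\log\frac{1/2+\gamma}{1/2-\gamma}$, as stated.

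The calculation is otherwise routine; the only points needing care are the reduction to the flip case above and, at $\alpha=1$, the justification that the Rényi divergence passes continuously to the KL divergence --- either by an $\alpha\to1$ limit via L'H\^opital's rule or by invoking the standard definition of $\mathcal{D}_1$. I do not anticipate any genuine obstacle beyond these two bookkeeping points.
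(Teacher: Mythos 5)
Your computation is correct: the reduction to the single flip case, the two-point Rényi sum for $\alpha>1$, and the KL collapse to $2\gamma\log\frac{1/2+\gamma}{1/2-\gamma}$ at $\alpha=1$ all check out. Note, however, that the paper does not prove this lemma at all --- it is imported by citation from the RDP reference \cite{rdp} --- so there is nothing internal to compare against; your direct evaluation of $\mathcal{D}_\alpha$ on the two induced Bernoulli laws is exactly the standard derivation given in that cited source, and it serves as a valid self-contained verification.
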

	\begin{remark}
		The privacy loss, $\rho(\alpha)$,  increases with the increase of $\gamma$, with $\gamma=0$ meaning no privacy loss, and $\gamma=1/2$ meaning no privacy protection. Fig. \ref{fig2} is provided to illustrate the monotonicity.
	\end{remark}
	\begin{figure}[t] 
		\centerline{\includegraphics[width=.9\linewidth]{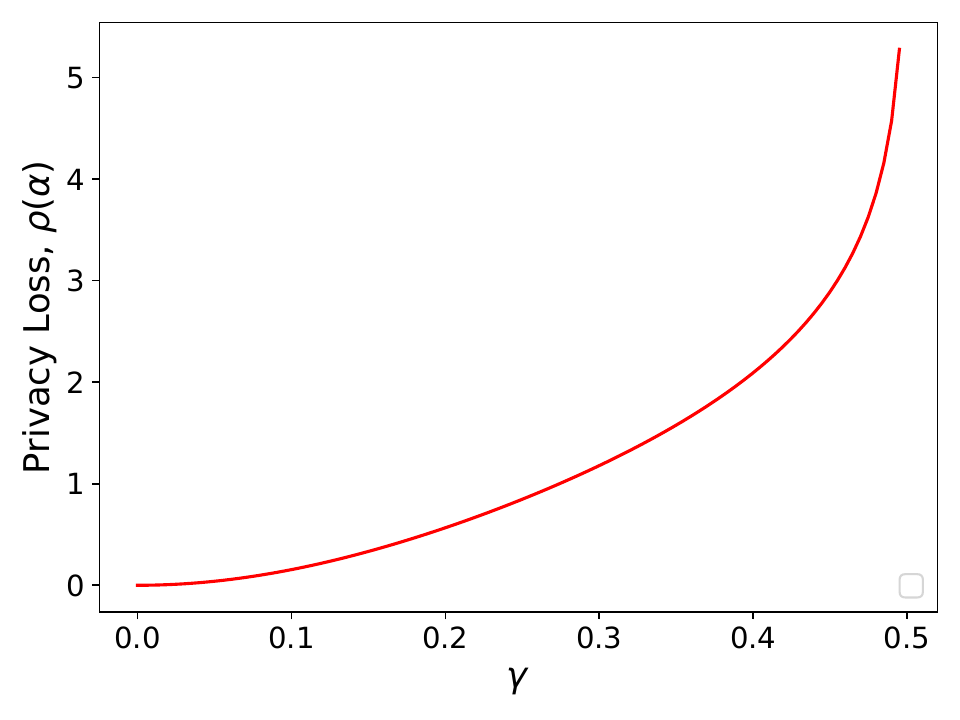}} 
		\caption{Illustration of the privacy loss of a random response $\mathbf{RR}_\gamma(\cdot)$, we set $\alpha = 2$ in this figure.}
		\label{fig2}
	\end{figure}
	\subsection{Client-Level DP in FL}
	
	In the context of FL, the model updates of different participating clients are transmitted independently, necessitating protection from the untrusted cloud server. Given the distributed nature of FL, in this work we consider client-level DP. Specifically, two datasets $D$ and $D'$ are neighboring datasets if $D \cup \{D_c\}$  or $D \setminus \{D_c\}$ is identical to $D'$ for a client $c$, where $D_c$ denotes all the data points associated with client $c$. This definition ensures that the privacy guarantee holds for all data points belonging to that client, and is stronger than the commonly-used notion of sample-level DP, which only protects the addition or removal of a single data point for a client. It is deemed more suitable for FL settings with large numbers of clients and each client holding a small dataset.
	
	To enhance privacy, secure aggregation (SecAgg) is a widely-used technique in the literature for enabling client-level DP in FL \cite{SecAgg}. This technique is a lightweight form of cryptographic secure multi-party computation that prevents the server from inspecting individual model updates of clients in FL. By allowing the server to learn only an aggregate function of the clients’ local model updates, typically the sum, SecAgg improves privacy. In this paper, we follow the approach taken in existing FL works\cite{LowRank}. We treat the SecAgg as a black-box and ignore the finite precision and modular summation arithmetic associated with secure aggregation in this paper.\\
	\subsection{Binary Neural Networks (BNNs)}
	BNN is a specialized type of neural networks that was initially introduced in \cite{NIPS2016_BNN}. It is characterized by highly compressed parameters, with its neural network weights being binary in nature. As a result, BNNs require significantly less storage space for model inference compared with classical neural networks. The compressed parameters also bring a low communication overhead when adopting BNN in FL. Compared with other post-quantization training methods, BNN is a quantization-aware training method and thus achieves better utility with a less model weight precision. The training process of BNNs is as follows.
	
	Consider an L-layer binary neural network, where the weights and activations are constrained to +1 or -1. We denote the weights (including trainable parameters of the activation) of each layer by $W_\ell^b \in \{1, \minus 1\}^*$, where $\ell = 1, ..., L$ and $*$ represents the dimension of $W_\ell^b$. The output of layer $\ell$ is given by:
	\begin{equation*}
		\mathbf{a}_\ell = f_\ell(W_\ell^b, \mathbf{a}_{\ell-1})
	\end{equation*}
	where $\mathbf{a_{\ell-1}}$ is the output of layer $\ell-1$ and $f_\ell(\cdot)$ denotes the operation of layer $\ell$ on the input.
	
	Training BNN with SGD requires auxiliary full-precision parameters, which is denoted as $\bar{W}$ and is of the same size as $W^b$. SGD algorithms proceed at training iteration $t$ with a learning rate $\eta$ as follows, where $t= 1, \dots, T$:
	\begin{equation}
		\bar{W}_t = \bar{W}_{t-1} - \eta \frac{\partial f(W_{t-1}^b, \mathcal{D}_t)}{\partial W_{t-1}^b}
		\label{eq:sgd}
	\end{equation}
	with $\mathbf{D}_t$ denoting the training data batch at iteration $t$, and $f(\cdot)$ is the loss function. $W_t^b$ is updated as:
	\begin{equation}
		W_t^b = Sign(W_t),
		\label{eq:binarize}
	\end{equation}
	where $Sign(\cdot)$ is an element-wise operation which returns the sign of each element.
	The values of the auxiliary parameters are restricted within in $(-1, +1)$, as in \cite{FL-BNN, NIPS2016_BNN}. Hence, at the end of each iteration, we have:
	\begin{equation*}
		\bar{W}_t = clip(\bar{W}_t, -1,1)=
		\begin{cases}
			-1,& \quad \bar{W}_t <-1 \\
			\bar{W}_t&, \quad -1\leq \bar{W}_t \leq 1\\
			1, &\quad \bar{W}_t >1.
		\end{cases}
	\end{equation*}\\
	
	\section{Proposed DPFL-BNN Algorithm} \label{algorithm}
	In this section, we present the DPFL-BNN training algorithm, which is shown to achieve a better privacy-utility trade-off and high communication efficiency. We first describe the threat model considered in this work, and we propose the DPFL-BNN algorithm and analyze its privacy and convergence guarantee.
	
	\subsection{Threat Model}
	The threat model considers adversaries from an ``honest but curious" server or clients in the system. The adversary honestly adheres to the training protocol but is curious about a client's training data and attempts to infer it from the information exchanged during the training process. Besides, the adversary could also be a passive outside attacker, who eavesdrops on the exchanged information during training process but will not actively attack the system, i.e., injecting false message or interrupting message transmissions. \\
	

	\begin{algorithm}[t] 
		\setstretch{1}
		\SetAlgoLined
		Initialize local model $\{\mathbf{W}^i\} = W_0^b$ and server model $\mathbf{W^s} = W_0^b$ \\
		\For{t = 0,1,\dots, T}{
			\{ Start a global iteration. \}\\
			\{ 1. Local training phase \} \\
			\For{client $i \in \mathcal{C}$}{
				\{ All clients train a BNN simutaneously\}\\
				Calculate the loss of BNN: $L_i = f(W_{t-1}^{b,i}, \mathcal{D}_i)$\\
				Update local parameters $\bar{W_{t}^i}$ with eq. \eqref{eq:sgd}\\
				Binarize: $W_t^{b,i} = StoSign(\bar{W}_t^i)$ \\
				Clip: $\bar{W_{t}^i} = clip(\bar{W_{t}^i}, -1, 1)$\\
			}
			\{2. Noise adding process\} \\
			Each client adding noise to the binarized model weights with $\mathbf{RR}_\gamma$:\\
			$\tilde{W}_t^{b,i} = \mathbf{RR}_\gamma(W_t^{b,i})$\\
			\{3. Parameter uploading process\}\\
			Each client encryt $\tilde{W}_t^{b,i}$ and send it to the parameter server via secure aggregation.\\
			\{4. Server aggregating process\}\\
			Server decrypts to get
			$\tilde{W}_t= \sum_{i=1}^N W_t^{b,i}$ \\
			\{5.Parameter downloading process\}\\
			Each clients downloads $\tilde{W}_t$ from server, and update local auxiliary parameters $W_t^i$ and binary parameters $W_t^{i,b}$ as:\\
			$W_t^i = \beta \tilde{W}_t + (1-\beta) \bar{W_{t}^i}$,\\
			$W_t^{i,b} = Sign(W_t^i).$\\
		}
		\caption{DPFL-BNN}
		\label{algorithm2}
	\end{algorithm}
	
	\subsection{DPFL-BNN}
	Our proposed DPFL-BNN algorithm is developed based on the FedAvg algorithm, with several key steps modified to improve the communication efficiency and protect data privacy. 
	Particularly, each client trains a BNN model locally. The binary model parameters are noised before being uploaded to the parameter server. To maintain the communication efficiency brought by the binary model parameters, discrete noise is added, i.e. the Random Response mechanism is adopted. The noised parameters are then sent to the server via SecAgg and the server decrypts the averaged weights without knowing individual updates. The averaged weights are then sent back to clients for next round of training. In the following discussion, we will treat the SecAgg as a black box which will faithfully compute the aggregated sum without further revealing private information since the there are many works on SecAgg and discussions on this are beyond the scope of this paper. 
	The entire training process of DPFL-BNN is illustrated in Algorithm \ref{algorithm2}, and the training details are described as follows. \\
	
	At the beginning of training round $t$, each client maintains two sets of model parameters, i.e., binary parameters $W_{t-1}^{b,i}$, and auxiliary full-precision parameters $W_{t-1}^i$. Then local update is performed on local dataset $\mathcal{D}_i$ with eq. \eqref{eq:sgd}, \eqref{eq:binarize} to update $W_{t}^i$ and $W_{t}^{b,i}$. \\
	
	The straightforward way to aggregate information in a private way from clients is to aggregate the auxiliary model parameters $W_t^i$ with an appropriate amount of Gaussian noise added. This, however, incurs a high communication cost since $W_t^i$ is full-precision. To reduce the communication cost, we only upload the binary parameters as in \cite{FL-BNN}. To protect privacy without incurring more communication overhead, discrete noise is added to $W_t^{b,i}$ instead of Gaussian noise. The noised model parameters are $\tilde{W}_t^{b,i} = \mathbf{RR}_\gamma(W_t^{b,i})$.\\
	
	The server aggregates $\tilde{W}_t^{b,i}$ and update $\tilde{W}_t= \sum_{i=1}^N W_t^{b,i}$. $\tilde{W}_t$ is then disseminated to clients to update $W_t^i$ and $W_t^{i,b}$ for the next round of training, where $W_t^i = \beta \tilde{W}_t + (1-\beta) \bar{W_{t}^i}$,
	$W_t^{i,b} = Sign(W_t^i).$ , where $\beta \in [0,1] $ is a hyper-parameter controlling the significance of the aggregated information from the server. \\
	
	Now, we show that the proposed algorithm satisfies agent-level DP.
	
	\subsection{Privacy Analysis}
	The level of DP guarantee of the training algorithm depends on the values of the fliping probability $\gamma$. According to Lemma \ref{lemma3}, the privacy loss increases with $\gamma$, with $\gamma=0$ meaning no privacy loss, and $\gamma=1/2$ meaning no privacy protection. Different from the Gaussian noise mechanism, we do not need to control the sensitivity since all the weights are already binarized. We can then use RDP to account the total privacy loss across $T$ rounds. The final DP guarantee can be obtained by converting RDP back to DP, as shown in Theorem 1.
	\begin{theorem}[Privacy Guarantee of DPFL-BNN]
		Assume that the updates are noised with $\mathbf{RR}_\gamma(\cdot)$ during each communication round. After $T$ rounds of training, DPFL-BNN satisfies $(\epsilon, \delta)$-DP for any $\delta \in (0,1)$ if 
		\begin{equation}
			\gamma \leq g^{-1}(\frac{1/\delta}{T}),
		\end{equation}
	where $g^{-1}(\cdot)$ is the inverse function of 
	\begin{equation*}
		\begin{split}
			g(\gamma) = & log(
			(\frac{1}{2}+\gamma)^{1+\frac{2log(1/\delta)}{\epsilon}}(\frac{1}{2}-\gamma)^{-\frac{2log(1/\delta)}{\epsilon}} \\
			+&(\frac{1}{2}-\gamma)^{1+\frac{2log(1/\delta)}{\epsilon}} (\frac{1}{2}+\gamma)^{-\frac{2log(1/\delta)}{\epsilon}} ).
		\end{split}
	\end{equation*}
	\end{theorem}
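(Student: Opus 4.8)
The plan is to chain together the three privacy lemmas already established in the excerpt. First I would treat a single communication round: by Lemma \ref{lemma3}, each application of $\mathbf{RR}_\gamma$ to the binarized weights satisfies $(\alpha,\rho(\alpha))$-RDP with
\[
(\alpha-1)\rho(\alpha) = \log\!\left( (\tfrac12+\gamma)^{\alpha}(\tfrac12-\gamma)^{1-\alpha} + (\tfrac12-\gamma)^{\alpha}(\tfrac12+\gamma)^{1-\alpha} \right)
\]
for every $\alpha>1$. Because the weights are already binary, no sensitivity clipping is required and this per-round guarantee holds uniformly in $\alpha$. Since the algorithm performs $T$ rounds, I would then invoke the RDP composition theorem (Lemma \ref{lemma2}) to conclude that the full $T$-round mechanism is $(\alpha, T\rho(\alpha))$-RDP. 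Finally, Lemma \ref{lemma1} converts this back to approximate DP: for any $\alpha>1$ and any $\delta\in(0,1)$ the mechanism is $\big(T\rho(\alpha) + \tfrac{\log(1/\delta)}{\alpha-1},\,\delta\big)$-DP.

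The heart of the argument is the choice of the free parameter $\alpha$, which I would fix as $\alpha = 1 + \tfrac{2\log(1/\delta)}{\epsilon}$. With this choice the conversion term collapses to $\tfrac{\log(1/\delta)}{\alpha-1} = \tfrac{\epsilon}{2}$, so the overall privacy cost becomes $T\rho(\alpha) + \tfrac{\epsilon}{2}$, and requiring this to be at most $\epsilon$ reduces to the single inequality $T\rho(\alpha)\le\tfrac{\epsilon}{2}$. Substituting $\alpha - 1 = \tfrac{2\log(1/\delta)}{\epsilon}$ into the RDP expression above, the numerator $(\alpha-1)\rho(\alpha)$ is exactly the function $g(\gamma)$ defined in the statement; rearranging $T\rho(\alpha)\le\epsilon/2$ then yields the clean threshold $g(\gamma)\le \tfrac{\log(1/\delta)}{T}$.

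To finish, I would invert $g$. By the Remark, the per-round privacy loss $\rho(\alpha)$ is strictly increasing in $\gamma$ on $(0,1/2)$, and since $g(\gamma)=(\alpha-1)\rho(\alpha)$ for a fixed $\alpha>1$, $g$ inherits this strict monotonicity with $g(0)=0$ and $g(\gamma)\to\infty$ as $\gamma\to 1/2$. Hence $g$ is a bijection from $(0,1/2)$ onto $(0,\infty)$, $g^{-1}$ is well defined, and $g(\gamma)\le \tfrac{\log(1/\delta)}{T}$ becomes $\gamma\le g^{-1}\!\big(\tfrac{\log(1/\delta)}{T}\big)$, which is the claimed condition.

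I expect the main obstacle to be conceptual rather than computational: justifying that the per-coordinate RDP bound of Lemma \ref{lemma3} is the correct building block for the client-level neighboring relation used in the paper, i.e., that flipping each binary weight with $\mathbf{RR}_\gamma$ and aggregating through SecAgg yields exactly the stated per-round RDP. The algebra after fixing $\alpha$ is routine, but verifying that the composition is over $T$ genuinely adaptive queries (rather than over the many coordinates of a single model vector) and that this matches the client-level definition is the step that deserves the most care.
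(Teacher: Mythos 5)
Your proof follows exactly the same route as the paper's---per-round RDP from Lemma \ref{lemma3}, composition over $T$ rounds via Lemma \ref{lemma2}, conversion to $(\epsilon,\delta)$-DP via Lemma \ref{lemma1} with the choice $\alpha = 1+\tfrac{2\log(1/\delta)}{\epsilon}$, and inversion of $g$ by monotonicity---and your version is in fact the cleaner one, since you correctly carry the conversion term $\tfrac{\log(1/\delta)}{\alpha-1}=\tfrac{\epsilon}{2}$ (where the paper writes $\tfrac{\log(1/\delta)}{\epsilon}$) and land on the threshold $g(\gamma)\le\tfrac{\log(1/\delta)}{T}$ rather than the paper's $\tfrac{1/\delta}{T}$, both of which appear to be typos in the paper. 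The caveat you flag at the end---that Lemma \ref{lemma3} bounds the RDP of a single binary coordinate while a client-level neighboring pair changes an entire model vector, so the per-round RDP should in principle compose over all coordinates---is a genuine issue, but the paper's proof silently makes the same leap, so your argument is faithful to (and no weaker than) the published one.
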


	\begin{proof}
		By Lemma \ref{lemma3}, each round of training satisfies $\left( 
		\alpha, \rho(\alpha)
		\right)$-RDP, with 
		
		\begin{equation*}
			\begin{split}
				\rho(\alpha) =\frac{1}{\alpha-1}log
				(
				(&\frac{1}{2}+\gamma)^\alpha(\frac{1}{2}-\gamma)^{(1-\alpha)}\\
				+&(\frac{1}{2}-\gamma)^\alpha (\frac{1}{2}+\gamma)^{1-\alpha}
				),
			\end{split}
		\end{equation*}
	Then by Lemma \ref{lemma2}, the algorithm satisfies $\left(\alpha, T\rho(\alpha) \right)$ after $T$ rounds of training. To guarantee $(\epsilon, \delta)$-DP, according to Lemma \ref{lemma1}, we need:
		\begin{equation*}
			T\rho(\alpha)+\frac{log(1/\delta)}{\epsilon} \leq \epsilon,
		\end{equation*}
	Let $\alpha = 1+2\frac{1/\delta}{\epsilon}$, and notice that $\rho(\alpha)$ is monotonically increasing with $\gamma$ and takes values between $(0, \infty)$ then we have:
	$
		\gamma \leq g^{-1}(\frac{1/\delta}{T}).
	$
	\end{proof}

\vspace{-3mm}
	\section{Simulations} \label{simulations}
	In this section, we present simulation results to demonstrate the performance of the proposed DPFL-BNN algorithm.
	\subsection{Experimental Settings}
	\subsubsection{Datasets and Models}
	We conduct experiments on MNIST \cite{mnist} and Fashion-MNIST \cite{fashionmnist}, a common benchmark for differentially private machine learning. The MNIST dataset has been shown to demonstrate good utility with strong privacy guarantee and is also the dataset considered in \cite{FL-BNN} for BNNs in FL.
	While the Fashion-MNIST is considered as “solved” in the computer vision community, achieving high utility with strong privacy guarantee remains difficult on this dataset. The Fashion-MNIST dataset consists of 60,000 28 × 28 grayscale images of 10 fashion categories, along with a set of 10,000 test samples. \\
	
	To simulate the independent and identically distributed (IID) data distribution, we randomly split the training data among 100 agents.
	To simulate different degrees of non-IID data splits, we utilize the Dirichlet distribution $Dir(\alpha)$ as in \cite{FDactive} with a larger $\alpha$ indicating a more homogeneous data distribution. Particularly in our experiment, $\alpha=100$ represents the IID case, $\alpha=1$ represents the non-IID case.\\
	
	We use a CNN model for both datasets, which consists of two 3 × 3 convolution layers (both with 16 filters, each followed with 2 × 2 max pooling), two fully connnected layers with 784 and 100 units, and a final softmax output layer. Note that all the layers use $tanh(\cdot)$ as the activation function due to the BNN training strategies.
	\subsubsection{Hyperparameter Settings}
	For all experiments, we simulate an FL setting with 100 clients, number of communication rounds $T = 100$, and a local update step of 10. We use Adam with momentum as the local optimizer of agents with a momentum coefficient of 0.5 and batch size of 64. The local learning rate is initialized as 0.1 and decayed at a rate of 0.1 every 40 communication rounds. The hyperparameter for BNN training, $\beta$ is set as 0.3, which follows the optimal setting in \cite{FL-BNN}. The test accuracy is averaged across all clients since each client holds different local models. All experiments are conducted with the same hyperparameter settings for three times with different random seeds of pytorch and python.\footnote{The implementation code could be found at github : https://github.com/LuminLiu/BinaryFL} 
	
	\subsection{Simulation Results}
	\begin{figure}
		\centering
		\begin{subfigure}[b]{0.43\textwidth}
			\centering
			\includegraphics[width=\textwidth]{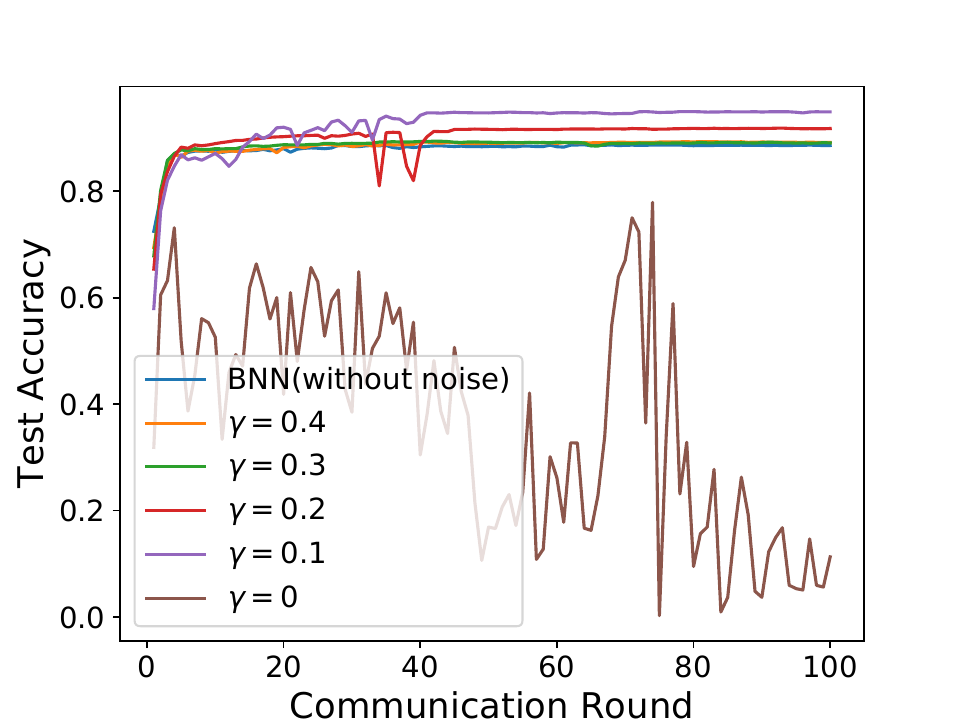}
			\caption{IID data distribution.}
			\label{fig:mnistiid}
		\end{subfigure}
		\hfil
		\begin{subfigure}[b]{0.43\textwidth}
			\centering
			\includegraphics[width=\textwidth]{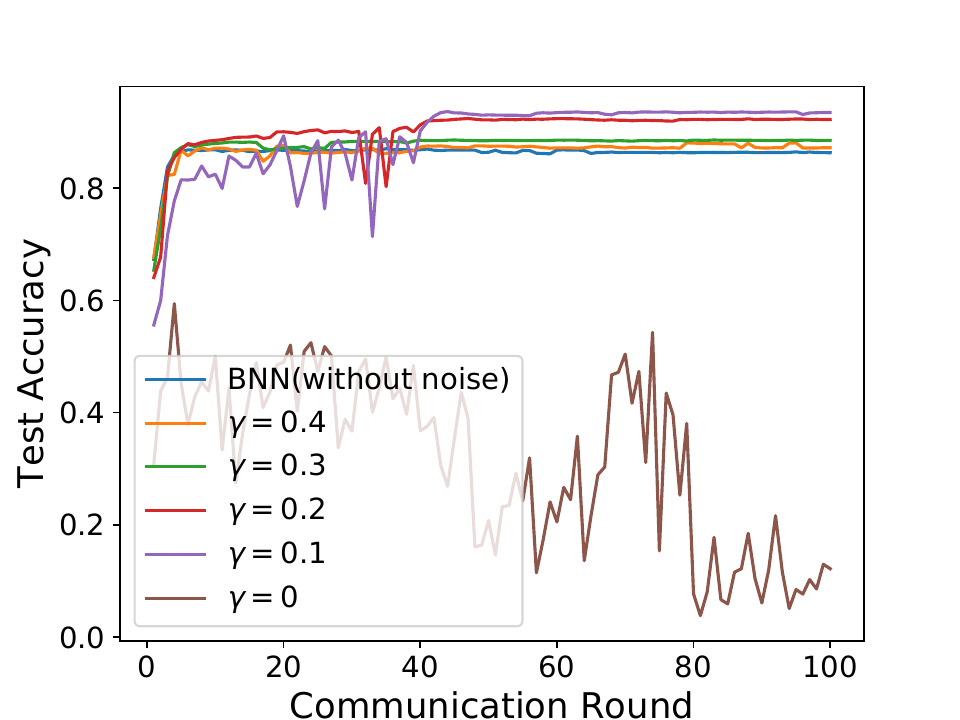}
			\caption{non-IID data distribution.}
			\label{fig:mnistnoniid}
		\end{subfigure}
		\caption{MNIST, 100 clients.}
		\label{fig:mnist}
	\end{figure}
	
		\begin{figure}
		\centering
		\begin{subfigure}[b]{0.43\textwidth}
			\centering
			\includegraphics[width=\textwidth]{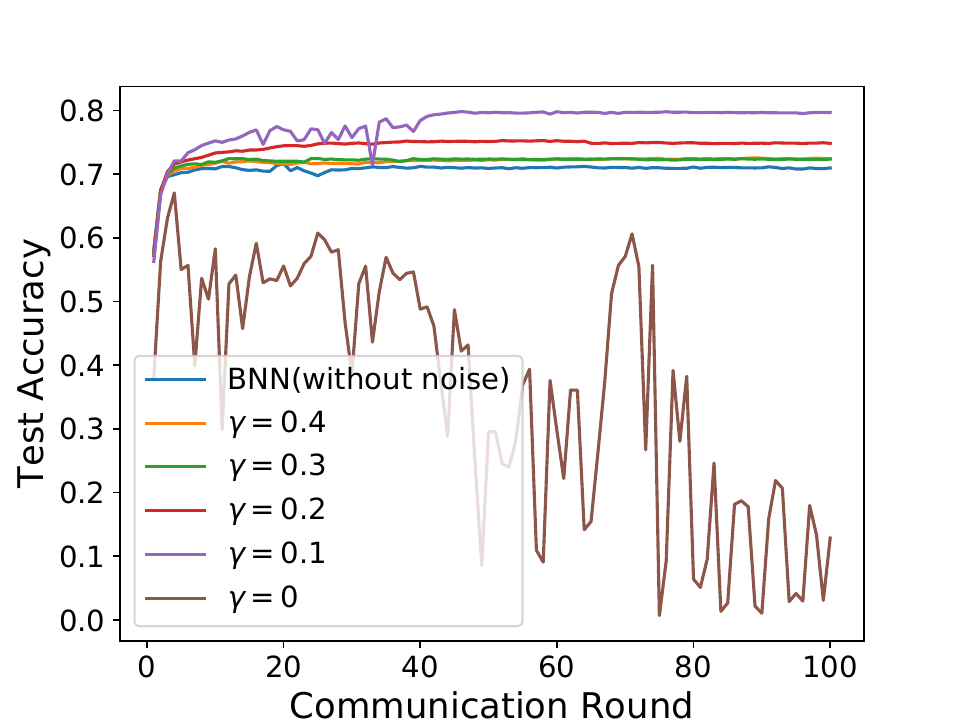}
			\caption{ IID data distribution.}
			\label{fig:fashionmnistiid}
		\end{subfigure}
		
		\begin{subfigure}[b]{0.43\textwidth}
			\centering
			\includegraphics[width=\textwidth]{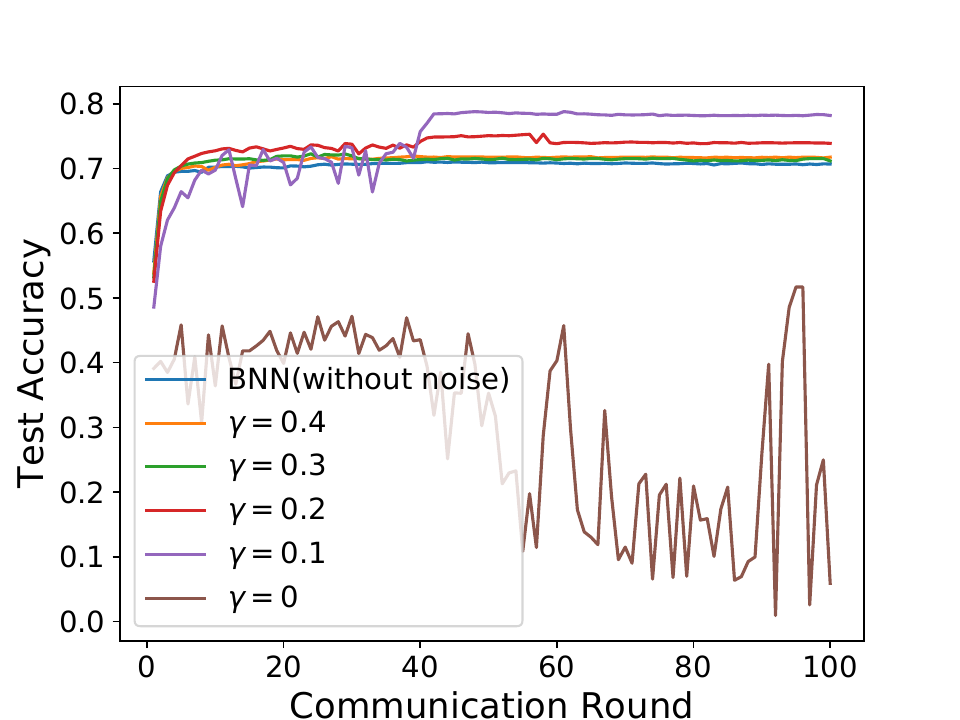}
			\caption{ non-IID data distribution.}
			\label{fig:fashionmnistnoniid}
		\end{subfigure}
	\caption{Fashion-MNIST, 100 clients.}
	\label{fig:fashionmnist}
	\end{figure}

	We present the simulation results for MNIST in Fig. \ref{fig:mnist} and Fashion-MNIST in Fig. \ref{fig:fashionmnist}. 
	BNN is the baseline with no privacy guarantee. As we decrease the value of $\gamma$ from 0.4 to 0.1, which means better privacy is achieved. Also a better training performance is achieved. $\gamma = 0.1$ presents the best accuracy, which outperforms the baseline without noise by a large margin. However, setting $\gamma$ as 0, which means full privacy and complete random response to the query, will degrade the performance.
	
	\section{Conclusions} \label{conclusion}
	FL is a new paradigm that has been attracting significant attention from industry and academia due to its ability to protect clients' privacy of raw data. In this paper, we have proposed DPFL-BNN, a novel differentially private FL scheme via binary neural networks to achieve client-level DP with high model accuracy in FL.  Experimental results have demonstrated the superior performance of our approach with privacy guarantee. In future work, we will investigate this unusual privacy-utility trade-off in details.

	\bibliographystyle{IEEEtran}  
	\bibliography{ref.bib}  
	
\end{document}